\def\etal{\mbox{et al.}}
\newcommand{\algacronym}{HMONN}
\newcommand{\algname}{Hierarchical Multi-Output Nearest Neighbor}
\begin{document}
\mainmatter              
\title{A Hierarchical Multi-Output Nearest Neighbor Model for Multi-Output Dependence Learning}
\titlerunning{A Hierarchical Multi-Output Nearest Neighbor Model}
\author{Richard G. Morris, Tony Martinez, Michael R. Smith}
\authorrunning{R.G. Morris, T. Martinez, and M.R. Smith} 
%
\tocauthor{Richard G. Morris, Tony Martinez, and Michael R. Smith}
\institute{ Brigham Young University, Provo, UT, 84602, USA,\\
\email{rmorris@axon.cs.byu.edu}, \email{martinez@cs.byu.edu}, \email{msmith@axon.cs.byu.edu}}

\maketitle

\begin{abstract}
	Multi-Output Dependence (MOD) learning is a generalization of standard classification problems that allows for multiple outputs that are dependent on each other.  A primary issue that arises in the context of MOD learning is that for any given input pattern there can be multiple correct output patterns.  This changes the learning task from function approximation to relation approximation.  Previous algorithms do not consider this problem, and thus cannot be readily applied to MOD problems.  To perform MOD learning, we introduce the \algname\ model (\algacronym) that employs a basic learning model for each output and a modified nearest neighbor approach to refine the initial results.  
\end{abstract}


\section{Introduction}
In this paper, we introduce Multi-Output Dependence (MOD) learning as an algorithmic family that models dependencies between multiple outputs.
Traditional supervised learning seeks to map an input vector $\bm{x}$ to an output vector $\bm{y} \in C$ where $C$ is set of possible outputs.
Further, multi-label classification specifically examines the case where multiple target labels must be assigned to each instance \cite{heath2010multiple} while structured prediction predict multiple target labels where $\bm{y}$ is structured.
MOD addresses problems where the outputs are dependent on each other and where there are multiple correct output vectors $\bm{y}$ for a given $\bm{x}$.
Any one output may be considered correct or incorrect only when considered in the context of other outputs.

An example MOD problem is the following.
Assume we want to propose an action for a company to take that generates sales and/or retains a customer.
For example, a particular customer may be contemplating switching to a competitor.
What should the company do to retain this customer?
There could be multiple correct actions.
A sales person could write the customer and offer incentives for staying, or the CEO could call the customer to express how important he is to them.
Of course, each action incurs a certain cost.
Having the CEO call a customer is more expensive than having a help-desk employee write the customer an e-mail, but both are viable solutions.
However, if that customer happens to be the largest source of revenue for that company, then sending a generic e-mail may not be the best course of action to take.
It may be the case where calling would only be correct \emph{if} the CEO made the phone call but writing an e-mail would be correct \emph{if} it came from a different person.



MOD problems can be seen as those problems where the outputs are important in addition to the inputs when making a decision.  
MOD learning requires approximating a relation, as opposed to the more traditional function approximation.  
We define a training data set $T$ to be a set of input vectors $\bm{x}$ each labeled with an appropriate output $\bm{y}$.  An input vector $\bm{x}$ can be associated with multiple output vectors $\bm{y}$ where $|\bm{y}| \geq 1$.  In this case, there are multiple correct outputs for $\bm{x}$.  Some outputs may be more desirable than others, but there are still multiple outputs that would be acceptable given the input $\bm{x}$.
This changes the task from finding a mapping function $\bm{x} \mapsto \bm{y}$ to finding a relation from $\bm{x}$ to $\bm{y}$.  We consider the relation where there are multiple outputs (where $|\bm{y}| > 1$) and there is a dependency between the outputs.  This gives rise to interesting questions about which correct solutions to choose when there are multiple correct solutions available.  



Many current algorithms fail to directly support multiple outputs.  Current approaches either induce one model per output or create a single model that gives multiple outputs without explicitly modeling the dependencies.  Different models support multiple independent outputs with a varying degree of success, without further modification.  Decision Tree learning algorithms must either induce multiple trees in order to produce multiple outputs or must induce a single tree that blows up exponentially, but neither of these approaches can model dependence between the output variables.  $K$-Nearest Neighbor algorithms can support multiple outputs with little change to the basic algorithm.  Multi-Layer Perceptron (MLP) models can give multiple outputs with a single model or multiple model approach.  However, none of these algorithms explicitly model dependent outputs.
Auto-associative models, such as Hopfield networks \cite{hopfield1985neural}, come close to this capability, but they are unable to handle arbitrary input and output mappings in contrast to the hetero-associative model that we present.

We introduce the \algname\ model\linebreak(\algacronym) in order to solve the MOD problem.  This hierarchical model has two layers.  The first layer is a na\"{i}ve approach with one learning model per output.  The models that comprise the first layer can be any traditional machine learning model.  The second layer is a modified nearest neighbor model that refines the predictions made on the first layer.  \algacronym~is shown graphically in Figure~\ref{fig:example}.  Though \algacronym\ focuses on tasks with nominal features, it also gives improvement for some tasks with real-valued features by implicitly modeling a similarity function for the feature space.


\begin{figure}
\centering
\includegraphics[width=0.45\textwidth]{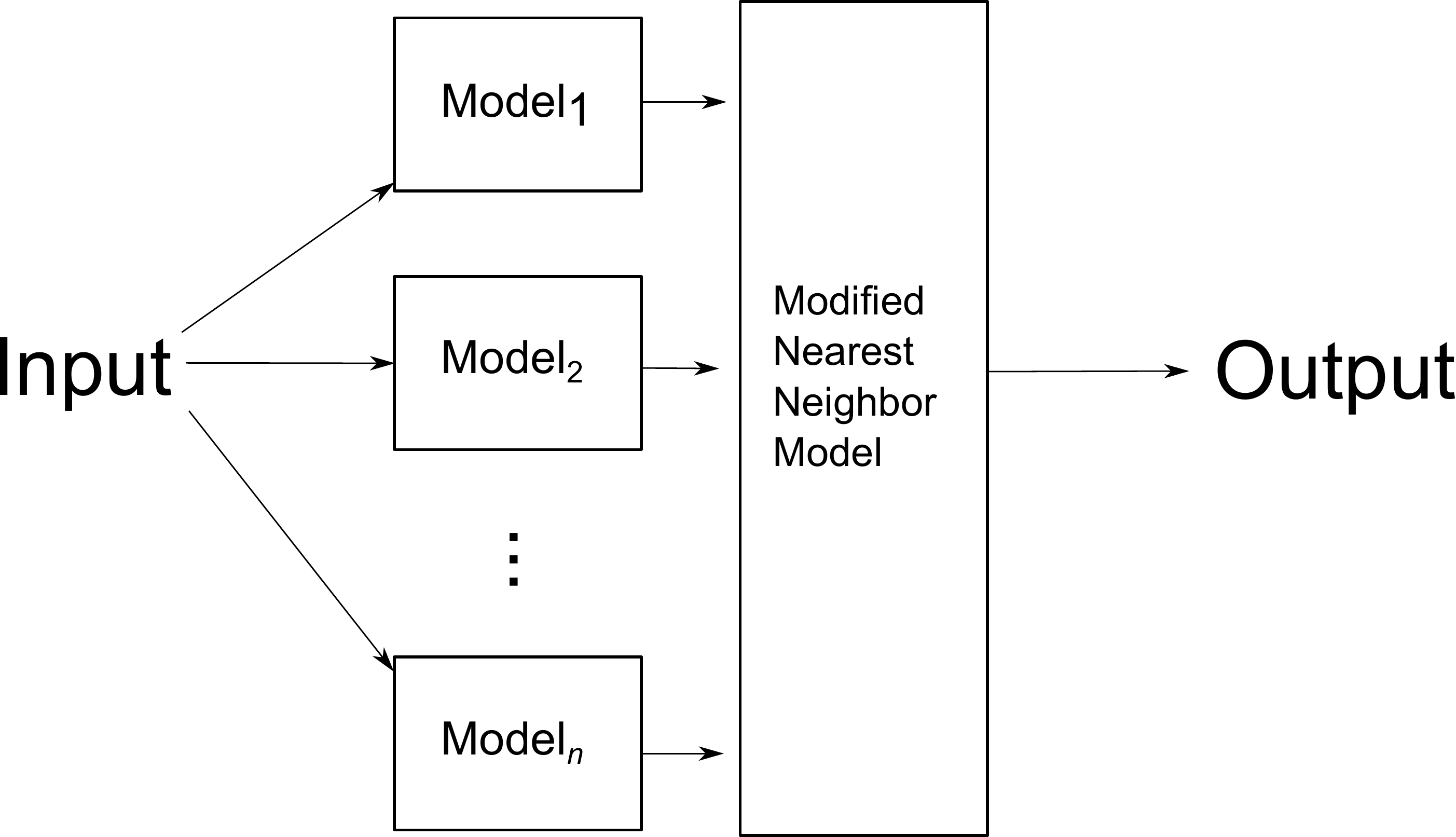}
\caption{The \algacronym\ model used to solve MOD problems.  The models used as input to the modified nearest neighbor model can use any algorithm to produce an initial prediction.  This models the dependence between the outputs in terms of the local context from the nearest neighbor algorithm.}
\label{fig:example}
\end{figure}


\section{Related Work}
\label{sec:relatedwork}

Other work has examined classification with multiple labels, although the labels are generally not considered to be dependent on each other and multiple correct labels are not considered.
Multi-label classification considers problems with multiple outputs, but no dependency between the outputs is modeled.
Tsoumakas \etal~\cite{tsoumakas2010mining} give an overview of multi-label classification.  They define two main approaches for multi-label classification.  The first approach is problem transformation, where the given data is transformed into a single problem that already has a well defined solution.  
The second approach is algorithm adaptation, where current algorithms are modified to solve the multi-label classification problem.  
Recent work has looked at correlations between labels in multi-label classification to improve accuracy \cite{Godbole04discriminativemethods}.
Read \cite{Read:2009:CCM:1617459.1617477,DBLP:journals/ml/ReadPHF11} introduced chain classifiers for supporting these correlations which could be viable for supporting MOD problems.

Many problems have a structure that is missed by standard classification algorithms \cite{taskar2003max}.  Structured Prediction (SP) seeks to solve this problem by modeling the structure of the outputs.  This structure could be a sequence, a tree, a graph, or an image.  This allows for multi-output as well as output dependencies.  However, these dependencies are almost always limited to Markovian dependencies --- related by time or space.  Theoretically, SP algorithms are capable of modeling any problem with structure, and MOD problems would be an example of this kind of problem.  The main difference between MOD and SP is that MOD problems are assumed to have some inputs with multiple correct outputs, whereas with current SP algorithms there is a single correct output assumed for each input.
Bakir \etal~\cite{bakir2007predicting} give an overview of the state of the art in SP.  

While MOD learning is relation approximation, this should not be confused with relational learning.  Statistical Relational Learning \cite{neville2003statistical,getoor2011learning} and Multi-Relational Learning \cite{dvzeroski2003multi} both handle relational data, not relation approximation.  These relational learning models learn a function from relational data and handle specially formatted and structured data.

\section{\algacronym}
\label{sec:methods}
We present the \textit{\algname}\ model (\algacronym) to solve the MOD problem. 
We define an output prediction $\hat{\bm{y}}$ to be correct for a given input vector $\bm{x}$ if there is some training instance in the training data $T$ that has $\bm{x}$ labeled with output $\bm{y}$ and $\hat{\bm{y}} = \bm{y}$ (or if $\hat{\bm{y}} = \bm{y}$ for the current test instance).  The traditional definition of a correct prediction only takes into account the labeling on the instance currently being tested.  This definition allows the model to use information contained within the training data to determine which output predictions should be counted as correct.

Traditional models are not able to model the dependencies between outputs.  This is, in part, due to the fact that traditional models are function approximators, and MOD problems are relational.  
As an example of this, consider a training set that contains two training patterns with the same $\bm{x}$ and different $\bm{y}$.
%
%
A traditional MLP will oscillate between the multiple possible outputs, and may not give any of the possible correct output vectors
An MLP will adjust weights towards outputting $\{1,0\}$ whenever the first instance is encountered, and whenever the second instance is encountered it will adjust the weights towards outputting $\{0,1\}$.  The network will consequently adjust the weights towards the output $\{0.5,0.5\}$ (without ever stabilizing), rather than towards either of the correct outputs.
A graphical example of the problem faced by an algorithm trying to learn a problem with multiple correct outputs is shown in Figure~\ref{fig:forked}.  The solid curve represents the relation in the training data and the dotted curve represents the function that could be learned, for example by an MLP.  An appropriate algorithm should output both branches of the relation, following the relation exactly, choose one of the branches arbitrarily, or choose one based on some criteria.  It should not, however, output something completely different.

\begin{figure}[t]
\centering
\begin{tabular}{cc}
\includegraphics[width=0.4\textwidth]{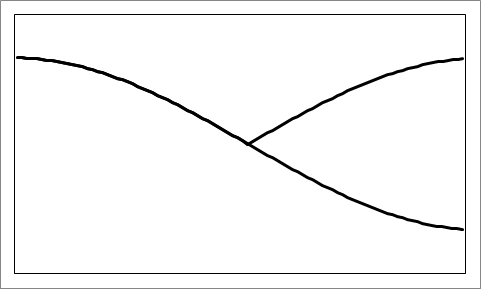} & \includegraphics[width=0.4\textwidth]{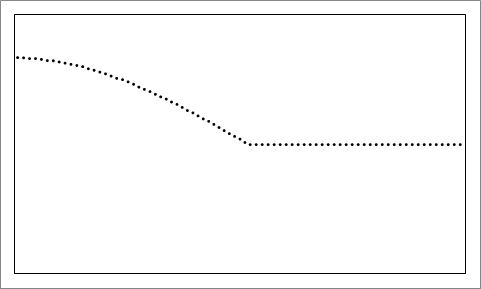}\\
Actual Relation & Possible Learned Function
\end{tabular}

\caption{A graphical example of a relation with multiple correct outputs.  The solid curve represents the relation itself.  The dotted curve represents the function learned by an MLP model.  The dotted curve follows the solid curve exactly until the relation branches, at which point the dotted curve is in the center of the two branches.}
\label{fig:forked}
\end{figure}

\algacronym~favors one output vector over the others.  Even though we could give a distribution of potential outputs from the neighborhood of the initial prediction, this version gives one of the possible correct output vectors for the given input vector $\bm{x}$.  This output vector is the most common among the given neighborhood, and thus varies with neighborhood size and makeup.
%
%
\algacronym\ is a first step towards solving the MOD problem.  
\algacronym\ starts with an initial prediction and then uses the extra information provided by that initial prediction to give the output.  The initial prediction is obtained using any machine learning method.  
Here, we choose to train an MLP classifier for each output.  The outputs from each MLP classifier are combined into an initial prediction.  We present a modified \emph{K}-Nearest Neighbor (KNN) algorithm to give the final prediction.  
%
%
\algacronym\ uses a different distance function where the initial prediction is used as part of the features for the distance function:
\begin{equation}
Dist(\{\bm{x}_1,\bm{y}_1\}, \{\bm{x}_2, \bm{y}_2\}) = \sqrt{\theta\sum_{i=1}^{N}{(x_{1,i} - x_{2,i})^2} + (1 - \theta)\sum_{i=1}^{M}{(y_{1,i} - y_{2,i})^2}}
\end{equation}
where $N$ is the number of features in the input space, $M$ is the number of outputs, and $\theta$ is a weight on the range $[0,1]$. 
The value for $\theta$ emphasizes either the input space or the output space as more important.  This modification of KNN captures the dependency between output variables by incorporating them into the input feature space.  
\algacronym\ takes the initial prediction from the MLP classifiers, uses this initial prediction as part of the features in a KNN algorithm, and chooses the majority output vector from the neighborhood as the final prediction.

The relationship between the dependence between outputs and multiple correct output vectors for a given input vector is shown in Theorem~\ref{thm:multiout}.
Theorem~\ref{thm:multiout} claims that we can observe the dependence between two output variables directly in the training data.  There is also a case for loose dependence that relies on different $\bm{x}$ vectors being only similar, but this work considers exact equality.

\begin{theorem}
\label{thm:multiout}
Given random variables $\bm{x}$, $y_i$, and $y_j$, where $\bm{x}$ is an input vector of nominal features and $y_i$ and $y_j$ are scalars from the output vector $\bm{y}$, if the two output variables, $y_i$ and $y_j$, are conditionally dependent on each other given the input $\bm{x}$ and the training data $T$, then there is some input vector, $\bm{x}$, associated with multiple output vectors, $\bm{y}$, in $T$.
\end{theorem}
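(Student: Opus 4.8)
The plan is to prove the contrapositive: I would show that if \emph{every} input vector $\bm{x}$ occurring in the training data $T$ is associated with exactly one output vector $\bm{y}$, then $y_i$ and $y_j$ are conditionally \emph{independent} given $\bm{x}$ and $T$. Since the conditional probabilities referenced in the hypothesis are taken with respect to the empirical distribution induced by $T$, conditioning on $\bm{x}=\bm{x}^*$ simply restricts attention to those training instances whose input equals $\bm{x}^*$, which makes the whole argument combinatorial rather than analytic.

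First I would fix an arbitrary value $\bm{x}^*$ that occurs in $T$ and examine the conditional distribution $P(y_i,y_j\mid \bm{x}=\bm{x}^*)$. Under the assumption that $\bm{x}^*$ is labeled with a single output vector $\bm{y}^*=(y_1^*,\dots,y_M^*)$, every training instance with input $\bm{x}^*$ carries identical coordinates $y_i^*$ and $y_j^*$. Hence, conditioned on $\bm{x}^*$, each of $y_i$ and $y_j$ collapses to a degenerate (point-mass) random variable: $P(y_i=y_i^*\mid \bm{x}^*)=1$ and $P(y_j=y_j^*\mid \bm{x}^*)=1$, with every other value receiving probability zero.

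The key step is then the elementary fact that a constant random variable is independent of any other random variable. Concretely, for the distinguished pair we have $P(y_i=y_i^*,\,y_j=y_j^*\mid \bm{x}^*)=1=P(y_i=y_i^*\mid \bm{x}^*)\,P(y_j=y_j^*\mid \bm{x}^*)$, while for any other combination of values both sides vanish because at least one marginal is zero. Thus the joint factors as the product of the marginals across \emph{all} value combinations, which is precisely conditional independence of $y_i$ and $y_j$ given $\bm{x}^*$. Since $\bm{x}^*$ was arbitrary, this holds for every input, yielding conditional independence given $\bm{x}$ and $T$ and completing the contrapositive.

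I expect the main obstacle to be purely a matter of \emph{pinning down the probability model} rather than any difficult computation: the statement only acquires meaning once one agrees that $P(\cdot\mid T)$ denotes the empirical distribution over the instances of $T$, so that ``$\bm{x}$ associated with a single $\bm{y}$ in $T$'' translates into ``$\bm{y}$ is a deterministic function of $\bm{x}$ under this distribution.'' Once that interpretation is fixed, the degeneracy argument is routine. A secondary point worth stating carefully is that the nominal, and hence discrete, nature of the features lets me reason directly with probability masses rather than densities, keeping the point-mass argument clean and avoiding any measure-theoretic subtleties.
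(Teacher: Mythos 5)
Your proposal is correct and takes essentially the same approach as the paper's proof: both hinge on the observation that a unique output vector for a given $\bm{x}$ forces the conditional distribution of $(y_i, y_j)$ given $\bm{x}$ and $T$ to be a point mass, which is incompatible with conditional dependence. The paper packages this as a direct contradiction (assuming dependence and uniqueness simultaneously and deriving $P(y_1 \mid y_2, \bm{x}, T) = P(y_1 \mid \bm{x}, T) = 1$), whereas you state the contrapositive and verify the product factorization over all value combinations --- a trivially equivalent framing, though your version is somewhat more careful in handling the non-labeled value combinations and in avoiding conditioning on zero-probability events.
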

\begin{proof}
Assume that outputs $y_1$ and $y_2$ are conditionally dependent given the input variable $X$ and the training data $T$.  By the definition of statistical dependence this implies that, for some input vector $\bm{x}$, $P(y_1 \mid y_2, \bm{x}, T) \neq P(y_1 \mid \bm{x}, T)$.  Assume that the output vector $\bm{y} = [y_1 , y_2]^T$ is the only possible correct output for $\bm{x}$.  Then it is the case that $P(y_1 \mid y_2, \bm{x}, T)~=~P(y_1 \mid \bm{x}, T) = 1$.  This contradicts the definition of statistical dependence.  Thus, there must be multiple possible output vectors for the input vector $\bm{x}$. \qed
\end{proof}

\section{Experimental Results}
\label{sec:experiments}
The accuracy of MOD classifiers was evaluated on three different types of data: synthetic data, UCI repository data, and real-world data.  This accuracy was compared to a baseline model that consists of a single classifier trained separately for each output, which we call the na\"ive model, where each separate prediction is combined into a single output vector.  Accuracy is defined as follows.

\begin{equation}
MOD\_accuracy = \frac{\sum_i^D I(\{\bm{x}_i,\bm{z}_i\} \in T \cup \{D_i\})}{|D|}
\label{eqn:modaccuracy}
\end{equation}
where $D$ is the test set, $\bm{z}_i$ is the predicted output vector for instance $\bm{x}_i$, $T$ is the training set, and $I(x)$ is the Kronecker delta function returning 1 if the expression $x$ is true and 0 otherwise.  This accuracy metric counts a prediction as correct if an input vector $\bm{x}$ in the data set is labeled with the predicted output vector $\bm{z}$.  This considers all correct output vectors as equally good.

Standard machine learning tasks with only nominal input features are common, and we assume that the same will hold for MOD data sets.  \algacronym\ shows clear improvement on these data sets.  Many tasks also have real-valued features.  While it is more difficult to find a duplicate $\bm{x}$ in these data sets, real-valued features will often have some level of discretization done to them, through either binning or rounding that increases the likelihood of finding duplicate $\bm{x}$ vectors in the data set.  This alters the amount of dependence between the output variables. 
Thus, in many current data sets, real-valued features do not necessarily take on a large range of values.  This allows the given definition of accuracy to work in many cases with real-valued features.
To better handle real-valued features, the definition of accuracy could be extended to allow for \emph{similar} values, as opposed to requiring values to be exactly equal.  
We are currently working on extending MOD accuracy metrics to better support real-valued features.


Despite the issue of the frequency of exact $\bm{x}$ vectors for real-valued features, \algacronym\ improves the accuracy in some of the experiments on synthetic and UCI data that have real-valued  inputs.
This is due to the fact that the nearest neighbor portion of the algorithm creates an implicit similarity function for the feature space.  The similarity function behaves differently based on the neighborhood size.  This gives a distance-based voting for which outputs are correct for any given portion of the feature space.  This causes the majority class for any given neighborhood in the feature space to always be the correct value.  
Selecting outputs in this fashion avoids some of the difficulty with real-valued features, even though it does not solve the problem completely.  We are currently exploring ways to fully resolve this problem as future work. 

Some initial experimentation was used to determine values for $k$ and $\theta$.  We tested values of $k$ from 1 to 11 and values of $\theta$ from $\{0,0.25,0.5,0.75,1\}$. We found that there was little difference between values of $k$ and $\theta$ except for $\theta = 0$, which performed slightly worse. 
In the following experiments, we use representative values $k = 7$, allowing for a reasonably sized neighborhood, and $\theta = 0.5$, to give an equal balance between the input and output features. 
%
Experiments are run using 10-fold cross validation.  The na{\"i}ve neural network layer had a standard MLP with a single hidden layer of $2n$ nodes for each output with $n$ being the number of attributes, including the outputs, in the corresponding data set.  All experiments are run with a learning rate of 0.1 and stop after 10 epochs without any improvement on a held-out validation set.  Statistical significance is determined using the Wilcoxon signed rank test with significance at $p < 0.05$.
%

\subsection{Synthetic Data}

Two different types of synthetic data were created.  One used real-valued features in order to determine whether \algacronym\ implicitly models a similarity function for the feature space, as hypothesized.  The other used only nominal features.

Real-valued synthetic data was created using the following process.  Given $o$ output variables, a data set is generated by selecting $c$ points in the input space as centroids.  These points are each randomly assigned a number of probability vectors.  A probability vector contains a probability distribution over possible output vectors.  To generate an instance, a centroid is selected at random, the input values for that instance are generated by randomly perturbing the centroid according to a Gaussian distribution.  An output vector is chosen by randomly selecting an output vector according to the probability distribution of a randomly chosen probability vector for that centroid.  
This generation process attempts to model the fact that, for MOD problems, a portion of the input space can belong to more than one output vector.
Nominal synthetic data was created using the process outlined above with one difference.  To generate a centroid, a center point for each feature was chosen from $\{0,1,2,3\}$.  New inputs were generated by adding a randomly selected value from $\{-1,0,+1\}$ to the center point for that feature.  Values above $3$ were set to $3$ and values below $0$ were set to $0$.
%
%
The parameters were set to $o \in \{2,3,4\}$ (with 4 possible values for each output) and $c \in \{2,4,6,8\}$.  The number of inputs was set to 3 times the number of outputs.  The number of probability vectors was the same as the number of centroids, 1.5 times the number of centroids, or 2 times the number of centroids.  5000 instances were generated for each data set.  This results in 12 data sets for each of 2 outputs, 3 outputs, and 4 outputs, giving a total of 36 data sets used.


\begin{table}[t]
\begin{center}
\caption{Results comparing \algacronym\ to the na{\"i}ve model for real-valued features.  Bold values indicate that the values are statistically significant.  The $p$-value for the total is $p <.0001$.}
\label{table:syntheticcomparison}
    \begin{tabular}{|c|c|c|c|c||c|c|c|c|}
 \multicolumn{1}{c}{} & \multicolumn{4}{c}{Real-Valued Features} & \multicolumn{4}{c}{Nominal Features} \\
        \hline
 Model& 2-Output  & 3-Output  & 4-Output  & Total &  2-Output  & 3-Output  & 4-Output  & Total\\ \hline
 \algacronym\  & \textbf{0.760} & \textbf{0.877} & \textbf{0.905} & \textbf{0.847} & \textbf{0.703} & \textbf{0.864} & \textbf{0.893} & \textbf{0.820}\\
 Na{\"i}ve  & 0.718 & 0.770 & 0.762 & 0.750 & 0.655 & 0.758 & 0.713 & 0.709\\
        \hline
    \end{tabular}
\end{center}
\end{table}

The results of comparing \algacronym\ to the na{\"i}ve model for real-valued and nominal features are given in Table \ref{table:syntheticcomparison}.
\algacronym\ outperformed the na{\"i}ve model for the real-valued synthetic data, and the improvement was always statistically significant.  This is likely due to the fact that \algacronym\ exploits the information contained in the local neighborhood in order to produce outputs.  \algacronym\ will have more information available with more outputs.  This will make the neighborhood more specific, thus giving the algorithm a higher chance of finding a correct output.
%
%
%
\algacronym\ outperformed the na{\"i}ve model for the nominal synthetic data as well, and the improvement was always statistically significant.  

\subsection{UCI Data}


The UCI repository \cite{FrankAsuncion:2010} does not contain any data sets that are MOD decision problems.  Therefore, MOD data sets were created from the original UCI Data sets by allowing each nominal feature to act as an output class for a derivative data set.  If, for example, the number of outputs was set to two, each data set would create $n$ derived data sets where $n$ is the number of nominal features for the chosen data set.  Each of these derived data sets consists of a nominal feature combined with the original output class acting as the output classes, with all of the other features acting as inputs.  Similarly, for three or four outputs the original output class is combined with two or three (respectively) nominal features to act as the outputs.  The number of data sets scales linearly in the number of inputs with two outputs, quadratically with three outputs, and cubicly with four outputs.  This is a contrived solution, but we assume that there is some dependency between input variables and the output variable --- especially for data sets from the UCI repository.  
Twenty UCI data sets were used for the experiments.  These data sets were chosen arbitrarily from those that had more than five nominal input features.  Nominal input features were necessary in order to create the derivative data sets.  
Information for each data set is provided in Table \ref{table:ucidata}.
%
%
Missing values were replaced by the mean/mode.

\begin{table}[t]
\centering
\begin{center}
\caption{Results for the UCI experiments.  The H columns signify the accuracy for \algacronym\, and the NI columns signify the accuracy for the na{\"i}ve independence model.  Bold indicates that the model had significantly greater accuracy.  The \emph{\# Significant} line indicates how many of the entries in each column were statistically significant. The \emph{info} provides information about the UCI data sets including the number of features (Feat) and nominal features (N), and the number of derived 2, 3, and 4-output data sets.}
\label{table:ucidata}
    {\begin{tabular}{| l | c | c || c | c || c | c || c | c |ccccc|}
        \hline
~ & \multicolumn{2}{|c||}{2-Output} &  \multicolumn{2}{|c||}{3-Output} & \multicolumn{2}{|c||}{4-Output} & \multicolumn{2}{|c|}{Total} & \multicolumn{5}{|c|}{Info} \\ \hline
Data set & H & NI & H & NI & H & NI & H & NI & Feat & N & 2-O & 3-O & 4-O\\ \hline
adult & 0.255 & 0.268 & 0.136 & 0.139 & 0.075 & \textbf{0.083} & 0.109 & \textbf{0.116} & 14 & 8 & 8 & 28 & 56 \\
anneal & \textbf{0.756} & 0.511 & \textbf{0.700} & 0.364 & \textbf{0.659} & 0.254 & \textbf{0.664} & 0.265& 38 & 32 & 32 & 496 & 4960 \\
autos & \textbf{0.265} & 0.192 & \textbf{0.208} & 0.118 & \textbf{0.132} & 0.068 & \textbf{0.159} & 0.088 & 25 & 10 & 10 & 45 & 120 \\
car & \textbf{0.233} & 0.229 & \textbf{0.066} & 0.063 & \textbf{0.018} & 0.017 & 0.067 & 0.065 & 6 & 6 & 6 & 15 & 20 \\
chess & \textbf{0.100} & 0.090 & \textbf{0.020} & 0.017 & \textbf{0.004} & 0.003 & \textbf{0.024} & 0.021 & 6 & 6 & 6 & 15 & 20 \\
cmc & \textbf{0.358} & 0.281 & \textbf{0.243} & 0.203 & \textbf{0.173} & 0.132 & \textbf{0.217} & 0.172 & 9 & 7 & 7 & 21 & 35 \\
colic & \textbf{0.330} & 0.255 & \textbf{0.174} & 0.106 & \textbf{0.106} & 0.048 & \textbf{0.124} & 0.064 & 22 & 15 & 15 & 105 & 455 \\
credit-a & 0.424 & 0.434 & 0.278 & \textbf{0.297} & 0.178 & \textbf{0.202} & 0.223 & \textbf{0.245} & 15 & 9 & 9 & 36 & 84 \\
crx & \textbf{0.441} & 0.438 & 0.265 & \textbf{0.290} & 0.161 & \textbf{0.191} & 0.210 & \textbf{0.236} & 15 & 9 & 9 & 36 & 84\\
heart-h & \textbf{0.509} & 0.279 & \textbf{0.401} & 0.152 & \textbf{0.300} & 0.067 & \textbf{0.357} & 0.119 & 13 & 7 & 7 & 21 & 35 \\
hepatitis & 0.591 & 0.605 & 0.443 & 0.431 & \textbf{0.381} & 0.341 & \textbf{0.401} & 0.369 & 19 & 13 & 13 & 78 & 286 \\
mush & \textbf{0.795} & 0.775 & \textbf{0.630} & 0.590 & \textbf{0.497} & 0.451 & \textbf{0.518} & 0.473 & 22 & 22 & 22 & 231 & 1540 \\
nursery & \textbf{0.282} & 0.278 & \textbf{0.091} & 0.089 & \textbf{0.028} & 0.027 & \textbf{0.069} & 0.067 & 8 & 8 & 8 & 28 & 56 \\
poker & \textbf{0.095} & 0.091 & \textbf{0.016} & 0.015 & \textbf{0.003} & 0.002 & \textbf{0.011} & 0.011 & 10 & 10 & 10 & 45 & 120 \\
post-op & \textbf{0.360} & 0.346 & \textbf{0.248} & 0.238 & 0.128 & 0.136 & 0.194 & 0.194 & 8 & 7 & 7 & 21 & 35 \\
SPECT & \textbf{0.661} & 0.651 & \textbf{0.513} & 0.494 & 0.397 & 0.372 & 0.415 & 0.391 & 22 & 22 & 22 & 231 & 1540 \\
TA & 0.217 & 0.183 & \textbf{0.139} & 0.061 & 0.088 & 0.017 & \textbf{0.146} & 0.083 & 5 & 4 & 4 & 6 & 4 \\
tic-tac & \textbf{0.453} & 0.425 & 0.190 & 0.187 & 0.065 & \textbf{0.074} & 0.127 & \textbf{0.130} & 9 & 9 & 9 & 36 & 84 \\
vote & \textbf{0.729} & 0.708 & \textbf{0.564} & 0.540 & \textbf{0.442} & 0.417 & \textbf{0.470} & 0.445 & 16 & 16 & 16 & 120 & 560 \\
zoo & \textbf{0.816} & 0.568 & \textbf{0.726} & 0.492 & \textbf{0.654} & 0.400 & \textbf{0.670} & 0.420 & 16 & 16 & 16 & 120 & 560 \\ \hline \hline
\# Sig & 16 & 0 & 15 & 2 & 13 & 4 & 13 & 4 &&&&&\\
        \hline
    \end{tabular}}
\end{center}
\end{table}


Table~\ref{table:ucidata} shows the results for the UCI data set experiments.  The table contains values for both \algacronym\ and the na\"ive model compared by number of outputs.
Each number is obtained by averaging the results across all the derived data sets from the original UCI data set for the given algorithm.  Statistically significant results are highlighted.  \algacronym\ outperformed the na\"ive model 79\% of the time (with 68\% of the time being statistically significant, see the Total columns).  In some cases there was not a significant difference.  In four cases, the na\"ive model outperformed \algacronym.  
\algacronym\ outperforms the na\"ive model in the majority of cases.  Occasionally, the na\"ive model performs better, but never with the same magnitude.  This further demonstrates the potential of \algacronym\ as a model to solve MOD decision problems.  This also validates the assumption that there is some dependence between the output variable and the input variables in the UCI data sets.


\subsection{Business Application Data}

The motivation for defining MOD problems stems from
a local business, InsideSales.com, that provided data for a real world MOD task.  Due to the proprietary nature of this business data, we are only permitted to reproduce a de-identified version of this data.  This data includes a two output data set and a three output data set.  The data sets have fourteen nominal features and eight real-valued features.  The two output data set has 32544 instances, and the three output data set has 32774 instances.  


The task is to determine the timing and method to contact business leads.  Business practices would imply that these variables are dependent (given the input $\bm{x}$), the time you contact a lead depends on the method used, and the method used depends on the timing.  
The results are shown in Table~\ref{table:realdata}.  \algacronym\ outperformed the na{\"i}ve model in both cases.  This shows that the improvement of \algacronym\ seen in the UCI and synthetic data can also be seen in real-world MOD problems.
The synthetic data and the real-world business data are definitely MOD problems. However the synthetic data is not necessarily representative of real data, and there is little real data.  The UCI data is used to supplement the other data sources, although it can only be assumed to represent MOD data.


\begin{table}[t]
\begin{center}
\caption{The table showing the results of the real-world business data experiments.}
\label{table:realdata}
    {\begin{tabular}{|l|c|c||l|c|c|}
        \hline
        ~              & \algacronym\     & Na{\"i}ve & ~              & \algacronym\     & Na{\"i}ve\\ \hline
        2-Output             & 0.508 & 0.465    & 3-Output              & 0.346  & 0.307 \\
         
        \hline
    \end{tabular}}
\end{center}
\end{table}

\section{Conclusions}
\label{sec:conclusions}
We provided a definition for MOD problems, as a well as a method to solve such problems.  We have defined the \algname\ model, with a na\"ive independence model as a first layer and a modified nearest neighbor model as the second layer.  This model is based on the assumption that local context is a key element to solving MOD problems.  \algacronym\ consistently outperforms the baseline model, typically with statistical significance.  This holds true for synthetic data, UCI repository data, and for one real-world business task.  

Future work will develop solutions using other types of models (such as relaxation networks), an improved method for calculating accuracy on MOD problems, improved methods for validating new MOD algorithms, and new methods for identifying and collecting MOD data.  With MOD problems, it is difficult to know how much dependency any given problem may have.  Many of the data sets that we used for validation could only be assumed to have some level of dependency.  A method to identify the degree of output dependency in a given data set is another piece of future work.

\bibliographystyle{splncs03}

\bibliography{proposal}

\newcommand{\noopsort}[1]{} \newcommand{\printfirst}[2]{#1}
  \newcommand{\singleletter}[1]{#1} \newcommand{\switchargs}[2]{#2#1}
\begin{thebibliography}{10}
\providecommand{\url}[1]{\texttt{#1}}
\providecommand{\urlprefix}{URL }

\bibitem{bakir2007predicting}
Bak{\i}r, G., Hofmann, T., Sch{\"o}lkopf, B.: Predicting structured data. The
  MIT Press (2007)

\bibitem{dvzeroski2003multi}
D{\v{z}}eroski, S.: Multi-relational data mining: an introduction. ACM SIGKDD
  Explorations Newsletter  5(1),  1--16 (2003)

\bibitem{FrankAsuncion:2010}
Frank, A., Asuncion, A.: {UCI} machine learning repository (2010),
  \url{http://archive.ics.uci.edu/ml}

\bibitem{getoor2011learning}
Getoor, L., Mihalkova, L.: Learning statistical models from relational data.
  In: Proceedings of the 2011 international conference on Management of data.
  pp. 1195--1198. ACM (2011)

\bibitem{Godbole04discriminativemethods}
Godbole, S., Sarawagi, S.: Discriminative methods for multi-labeled
  classification. In: Proceedings of the 8th Pacific-Asia Conference on
  Knowledge Discovery and Data Mining. pp. 22--30. Springer (2004)

\bibitem{heath2010multiple}
Heath, D., Zitzelberger, A., Giraud-Carrier, C.: {A Multiple Domain Comparison
  of Multi-label Classification Methods}. Working Notes of the 2nd
  International Workshop on Learning from Multi-Label Data p.~21 (2010)

\bibitem{hopfield1985neural}
Hopfield, J., Tank, D.: {Neural computation of decisions in optimization
  problems}. Biological cybernetics  52(3),  141--152 (1985)

\bibitem{neville2003statistical}
Neville, J., Rattigan, M., Jensen, D.: Statistical relational learning: Four
  claims and a survey. In: Workshop SRL, Int. Joint. Conf. on AI (2003)

\bibitem{Read:2009:CCM:1617459.1617477}
Read, J., Pfahringer, B., Holmes, G., Frank, E.: Classifier chains for
  multi-label classification. In: Proceedings of the European Conference on
  Machine Learning and Knowledge Discovery in Databases: Part II. pp. 254--269.
  Springer-Verlag (2009)

\bibitem{DBLP:journals/ml/ReadPHF11}
Read, J., Pfahringer, B., Holmes, G., Frank, E.: Classifier chains for
  multi-label classification. Machine Learning  85(3),  333--359 (2011)

\bibitem{taskar2003max}
Taskar, B., Guestrin, C., Koller, D.: Max-margin markov networks. In: Thrun,
  S., Saul, L., Sch\"{o}lkopf, B. (eds.) Advances in Neural Information
  Processing Systems 16. MIT Press (2004)

\bibitem{tsoumakas2010mining}
Tsoumakas, G., Katakis, I., Vlahavas, I.: {Mining multi-label data}. Data
  Mining and Knowledge Discovery Handbook pp. 667--685 (2010)

\end{thebibliography}

\end{document}